\newcommand{\Ind}[1]{\ensuremath{\mathbbm{1} \hspace{-0.03in} \left[#1\right]}}                     
\newcommand{\NormII}[1]{\ensuremath{\lVert #1 \rVert}_2}              
\newcommand{\InNorm}[1]{{\left\vert\kern-0.2ex\left\vert\kern-0.2ex\left\vert #1 
    \right\vert\kern-0.2ex\right\vert\kern-0.2ex\right\vert}}                    
\newcommand{\InNormII}[1]{{\left\vert\kern-0.2ex\left\vert\kern-0.2ex\left\vert #1 
    \right\vert\kern-0.2ex\right\vert\kern-0.2ex\right\vert}_2}                    
\newcommand{\InNormInfty}[1]{{\left\vert\kern-0.2ex\left\vert\kern-0.2ex\left\vert #1 
    \right\vert\kern-0.2ex\right\vert\kern-0.2ex\right\vert}_{\infty}}           
\newcommand{\Inner}[2]{\langle #1, #2 \rangle}                    
\newcommand{\defeq}{\overset{\mathrm{def}}{=}}                                   
\newtheorem{definition}{Definition}
\newtheorem{lemma}{Lemma}
\newtheorem{theorem}{Theorem}
\newtheorem{remark}{Remark}
\newtheorem{corollary}{Corollary}
\newcommand{\diag}{\mathrm{diag}}
\newcommand{\subto}{\mathrm{subject\ to}}
\def\1{\bm{1}}
\def\eps{{\epsilon}}
\def\Null{\mathsf{Null}}
\def\vzero{{\bm{0}}}
\def\vone{{\bm{1}}}
\def\vpi{{\bm{\pi}}}
\def\vpsi{{\bm{\psi}}}
\def\va{{\bm{a}}}
\def\vc{{\bm{c}}}
\def\vp{{\bm{p}}}
\def\vq{{\bm{q}}}
\def\vv{{\bm{v}}}
\def\vy{{\bm{y}}}
\def\eva{{a}}
\def\evc{{c}}
\def\evp{{p}}
\def\evy{{y}}
\def\mA{{\bm{A}}}
\def\mB{{\bm{B}}}
\def\mD{{\bm{D}}}
\def\mI{{\bm{I}}}
\def\mL{{\bm{L}}}
\def\mM{{\bm{M}}}
\def\mN{{\bm{N}}}
\def\mQ{{\bm{Q}}}
\def\mT{{\bm{T}}}
\def\mV{{\bm{V}}}
\def\mX{{\bm{X}}}
\def\mY{{\bm{Y}}}
\def\mLambda{{\bm{\Lambda}}}
\DeclareMathAlphabet{\mathsfit}{\encodingdefault}{\sfdefault}{m}{sl}
\SetMathAlphabet{\mathsfit}{bold}{\encodingdefault}{\sfdefault}{bx}{n}
\def\gM{{\mathcal{M}}}
\def\gO{{\mathcal{O}}}
\def\sA{{\mathbb{A}}}
\def\sR{{\mathbb{R}}}
\def\emA{{A}}
\def\emD{{D}}
\def\emX{{X}}
\def\emY{{Y}}
\newcommand{\tor}{\mathrm{or}}
\newcommand{\E}{\mathbb{E}}
\DeclareMathOperator*{\argmax}{arg\,max}
\DeclareMathOperator{\Tr}{Tr}
\author{%
	Kevin Bello\\
	Department of Computer Science\\
	Purdue Univeristy\\
	West Lafayette, IN 47906, USA \\
	\texttt{kbellome@purdue.edu} \\
   \And
	Jean Honorio\\
	Department of Computer Science\\
	Purdue Univeristy\\
	West Lafayette, IN 47906, USA \\
	\texttt{jhonorio@purdue.edu} \\
}
\title{Fairness constraints can help exact inference in structured prediction}
\begin{document}

\maketitle

\begin{abstract}
Many inference problems in structured prediction can be modeled as maximizing a score function on a space of labels, where graphs are a natural representation to decompose the total score into a sum of unary (nodes) and pairwise (edges) scores.
Given a generative model with an undirected connected graph $G$ and true vector of binary labels $\vys$, it has been previously shown that when $G$ has good expansion properties, such as complete graphs
or $d$-regular expanders, one can exactly recover $\vys$ (with high probability and in polynomial time) from a single noisy observation of each edge and node.
We analyze the previously studied generative model by \citet{globerson2015hard} under a notion of statistical parity.
That is, given a \textit{fair} binary node labeling, we ask the question whether it is possible to recover the fair assignment, with high probability and in polynomial time, from single edge and node observations.
We find that, in contrast to the known trade-offs between fairness and model performance, the addition of the fairness constraint \textit{improves} the probability of exact recovery.
We effectively explain this phenomenon and empirically show how graphs with poor expansion properties, such as grids, are now capable to achieve exact recovery with high probability.
Finally, as a byproduct of our analysis, we provide a tighter minimum-eigenvalue bound than that of Weyl's inequality.
\end{abstract}

\section{Introduction}

Structured prediction consists of receiving a structured input and producing a combinatorial structure such as trees, clusters, networks, sequences, permutations, among others. 
From the computational viewpoint, structured prediction is in general considered intractable because of the size of the output space being exponential in the input size.
For instance, in image segmentation tasks, the number of admissible segments is exponential in the number of pixels.
In this work, we focus on the inference problem, where a common approach is to exploit local features to infer a global structure.

Consider an undirected graph $G=(V,E)$.
In graphical models, specifically in Markov random fields, one generally tries to find a solution to the following optimization problem:
\begin{align}
\label{eq:mrf_inference}
	\max_{\vy \in \gM^{|V|}}  
	\sum_{v \in V, m \in \gM} c_v(m) \Ind{\evy_v=m} 
	+
	\sum_{(u,v) \in E, m_1,m_2 \in \gM}  c_{u,v}(m_1,m_2) \Ind{\evy_u=m_1, \evy_v=m_2},
\end{align}
where $\gM$ is the set of possible labels, $c_u(m)$ is the cost or potential of assigning label $m$ to node $v$, and $c_{u,v}(m,n)$ is the cost or potential of assigning $m$ and $n$ to the neighbors $u,v$ respectively.
This type of inference problem arises in the context of community detection, statistical physics, sociology, among others.
Only a few particular cases of problem \ref{eq:mrf_inference} are known to be solvable in polynomial time.
To name a few, \citep{schraudolph2009efficient} and \citep{chandrasekaran2008complexity} showed that problem \ref{eq:mrf_inference} can be solved exactly in polynomial time for planar ising models and graphs with low treewidth, respectively.

As the use of machine learning in decision making increases in our society \citep{kleinberg2018human}, researchers have shown interest in developing methods that can mitigate unfair decisions or avoid bias amplification.
With the existence of several notions of fairness \citep{gajane2017formalizing, verma2018fairness, barocas2016big, feldman2015certifying}, and some of them being simultaneously incompatible to be achieved \citep{kleinberg2016inherent}, the first step is to define the notion of fairness, which is commonly dependent upon the task on hand.
For our purposes, we will adapt the notion of statistical parity and apply it to the exact inference problem.
Several notions of statistical parity have been studied in prior works \citep{agarwal2019fair, johnson2016impartial, calders2013controlling}, where, in general, statistical parity enforces a predictor to be independent of the protected attribute.
In particular, in regression, \citep{agarwal2019fair} relaxed the principle of statistical parity and studied $\varepsilon$-away difference of marginal CDF and conditional CDF on the protected attribute.
Finally, unlike the works on supervised learning \citep{hardt2016equality, luong2011k, agarwal2018reductions}, the work of \citep{chierichetti2017fair} is among the first to adapt the disparate impact doctrine (related to statistical parity) to unsupervised learning, specifically, to the clustering problem.

We study a similar generative model that has been previously used in \citep{globerson2015hard, foster2018inference, bello2019exact, abbe2016exact}, and whose objective follows a similar form of problem \ref{eq:mrf_inference}, with the addition of a fairness constraint.
While \citep{globerson2015hard, foster2018inference} studied the regime of approximate inference, \citep{abbe2016exact, bello2019exact} studied the scenario of exact inference.
The latter authors showed that it suffices to have graphs with high expansion properties to efficiently achieve exact recovery with high probability.
However, graphs such as grids remained evasive to exact recovery due to their poor expansion properties.

\paragraph{Contributions.}
We propose a generative model, similar to that of \citep{globerson2015hard}, where the true labeling is \textit{fair}, and ask the following question: 
Will the addition of a fairness constraint in the inference problem have any effect on the probability of exact recovery?
Contrary to the intuitive thinking that it should have a negative impact due to the several results on inherent trade-offs of fairness and performance \citep{zhao2019inherent, kleinberg2016inherent},
we show that the addition of a fairness constraint, in this case a notion of statistical parity, can help increasing the probability of exact recovery.
We are able to formally explain why this phenomenon occurs, and also show empirical evidence to support our findings.
Finally, as a byproduct of our analysis, we provide a tighter eigenvalue bound than that of Weyl's inequality for the case of the minimum eigenvalue.

\section{Notation and problem formulation}
	
	Vectors and matrices are denoted by lowercase and uppercase bold faced letters respectively (e.g., $\va,\mA$), while scalars are in normal font weight (e.g., $a$).
	For a vector $\va$, and a matrix $\mA$, their entries are denoted by $\eva_i$ and $\emA_{i,j}$ respectively.
	Indexing starts at $1$, with $\mA_{i,:}$ and $\mA_{:,i}$ indicating the $i$-th row and $i$-th column of  $\mA$ respectively.
	The eigenvalues of a $n \times n$ matrix $\mA$ are denoted as $\lambda_{i}(\mA)$, where $\lambda_1$ and $\lambda_n$ correspond to the minimum and maximum eigenvalue respectively.
	Finally, the set of integers $\{1, \ldots, n\}$ is represented as $[n]$.
	
	\paragraph{Statistical parity.}
	In few words, statistical parity enforces a predictor to be independent of the protected attributes.
	While the definition has been mostly used in supervised learning, in this work we adapt this notion of fairness to an inference problem.
	Specifically, we say that given a vector attribute $\va$, the assignment $\vys$ is fair under statistical parity if $\vys^\top \va = 0$.
	In particular, we will consider $\evys_i \in \{-1,+1\}$ as described later.
	That is, we would like the partitions (or clusters) to have the same sum of the attribute $\va$.\footnote{Note that the elements of the attribute can already be divided by the size of the clusters they belong to, in which case it would represent equal averages. Here we make no assumptions on the elements of $\va$.}
	
	\paragraph{Problem definition.} 
	We aim to predict a vector of $n$ node labels $\vyh = (\evyh_1, \dots, \evyh_n)^\top$, where $\evyh_i \in \{+1,-1\}$, from a set of observations $\mX$ and $\vc$, where $\mX$ and $\vc$ correspond to noisy measurements of edges and nodes respectively.
	These observations are assumed to be generated from a \textit{fair} ground truth labeling $\vys$ by a generative process defined via an undirected connected graph $G = (V, E)$, an edge noise $p \in (0, 0.5)$, and a node noise $q \in (0, 0.5)$.
	For each edge $(u,v) \in E$, we have a \textit{single} independent edge observation $\emX_{u,v} = \evys_u \evys_v$ with probability $1-p$, and $\emX_{u,v} = -\evys_u \evys_v$ with probability $p$.
	While for each edge $(u,v) \notin E$, the observation $\emX_{u,v}$ is always $0$.
	Similarly, for each node $u \in V$, we have an independent node observation $\evc_u = \evys_u$  with probability $1-q$, and $\evc_u = -\evys_u$ with probability $q$.
	In addition, we are given a set of attributes $\sA = \{\va_1, \ldots, \va_k \}$ such that $\va_i \in \sR^n$ and $\Inner{\va_i}{\vys} = 0$ for all $i \in [k]$, i.e., for each $i$ we have $\sum_{j|\evys_j = 1} (\eva_{i})_j = \sum_{j|\evys_j = -1} (\eva_{i})_j$. 
	In other words,  we say that the ground truth labeling $\vys$ is fair under statistical parity with respect to the set of attributes $\sA$.
	Thus, we have a \textit{known} undirected connected graph $G$, an \textit{unknown} fair ground truth label vector $\vys \in \{+1,-1\}^n$, noisy observations $\mX \in \{-1,0,+1\}^{n\times n}$ and $\vc \in \{-1,+1\}^n$, a set $\sA$ of $k$ attributes $\va_i \in \sR^n$, and our goal is to find sufficient conditions for which we can predict, in polynomial time and with high probability, a vector label $\vyh \in \{-1,+1\}^n$ such that $\vyh = \vys$. 
	
	Given the generative process, our prediction $\vyh$ is given by the following combinatorial problem:
	\begin{align}
		\vyh = \quad \argmax_{\vy} \quad &\frac{1}{2} \vy^\top \mX \vy  + \alpha \cdot  \vc^\top \vy \label{eq:objective_all}\\
		\subto \quad & \Inner{\va_i}{\vy} = 0, \ \forall i \in [k]		\nonumber\\
		&\evy_i = \pm 1, \ 	\forall i \in [n]. \nonumber
	\end{align} 
	where $\alpha = \nicefrac{\log \frac{1-q}{q}}{\log \frac{1-p}{p}}$, intuitively, this value captures the amount of penalty for the linear term based on the noise parameters, and is motivated by maximum likelihood estimation \citep{globerson2015hard}.
	\begin{remark}
	The optimization problem \ref{eq:objective_all} is clearly NP-hard to compute in general.
	For instance, consider the case where $k=1$, and  $(\eva_{1})_j$ is a positive integer for all $j \in [n]$, i.e., there is a single attribute with positive entries.
	Also, let $\mX = \vzero$ and $\vc = \vzero$, that is, any vector $\vy$ will attain the same objective value.
	Then, the problem reduces to find an assignment $\vy$ such that $\Inner{\va_1}{\vy} = 0$, which is equivalent to the known NP-complete partition problem.
	Another example is the case when $\va_1 = \vone$, that is, a feasible solution has to have the same number of positive and negative labels.
	Thus, if $\mX$ is such that it encourages minimizing the number of edges between clusters, the problem reduces to the minimum bisection problem, which is known to be NP-complete \citep{garey1979computers}.
	Finally, consider also the case in which $k=0$, then it is known that when the graph $G$ is a grid, the problem is NP-hard \citep{barahona1982computational}.
	\end{remark}
	In the next section, we relax problem \ref{eq:objective_all} to a polynomial problem, and formally show how the addition of some fairness constraints such as that of statistical parity can help the exact recovery rate of previously known results \citep{abbe2016exact, bello2019exact}.
	
\section{The effect of statistical parity constraint on exact recovery of labels}
	Our approach to analyze exact recovery will focus on the quadratic term of problem \ref{eq:objective_all}.
	This is because if $\vyh \in \{\vys, -\vys\}$ from solving only the quadratic term with the constraints, then by using majority vote with respect to the observation $\vc$ one can decide which of $\{\vys, -\vys\}$ is optimal, as done by \citep{globerson2015hard, bello2019exact}.
	We will show sufficient conditions for exact recovery in polynomial time through the use of semidefinite programming (SDP) relaxations, which has also been previously used by \citep{amini2018semidefinite,abbe2016exact, bello2019exact}.
	SDPs are optimization problems that can be solved in polynomial time by using, for example, interior point methods.
	Thus, showing sufficient conditions for exact recovery under SDP relaxations implies that we have sufficient conditions for exact recovery in polynomial time.
		
	Next, we provide the SDP relaxation of problem \eqref{eq:objective_all}.
	Let $\mY = \outer{\vy}$, we have that $\vy^\top \mX \vy  = \Tr(\mX \mY) = \Inner{\mX}{\mY}$.
	Note that $\outer{\vy}$ is rank-1 and symmetric, which implies that $\mY$ is a positive semidefinite matrix.
	Therefore, as dropping the constant $\nicefrac{1}{2}$ from the quadratic term  does not affect the optimal solution, the SDP relaxation to the combinatorial problem \eqref{eq:objective_all} results in the following primal formulation:
		\begin{align}
			\mYh = \quad \argmax_\mY \quad & \Inner{\mX}{\mY}		\label{eq:primal} \\
			\subto \quad& \emY_{ii} = 1,\  i \in [n],		\nonumber\\
			&\va_i^\top \mY \va_i = 0,\	i \in [k],	\nonumber\\
			&\mY \succeq 0.		\nonumber 
		\end{align}	
	Basically, problem \ref{eq:primal} drops the rank-1 constraint from problem \ref{eq:objective_all} and results in a convex formulation that can be solved in polynomial time.
	Next, we present an intermediate result that is of use for the proof of Theorem \ref{thrm:conditions}.
	\begin{lemma}
	\label{lemma:matrix_perturbation}
		Let $\mM \in \sR^{n\times n}$ be a positive semidefinite matrix and
		let $\mN \in \sR^{n\times n}$ be a rank-$l$ positive semidefinite matrix, and consider a non-negative $\alpha  \in \sR$.
		Define $\Delta = \lambda_2(\mM) - \lambda_1(\mM)$, where $\lambda_1(\cdot)$ and $\lambda_2(\cdot)$ represent the minimum and second minimum eigenvalue, respectively.
		Also, let $\vq_1$ denote the first eigenvector of $\mM$, and let $\vv_1, \ldots, \vv_n$ denote the eigenvectors of $\mN$ related to $\lambda_1(\mN), \ldots, \lambda_n(\mN)$ respectively. 
		Then, we have that:
		\[
			\lambda_1(\mM + \alpha \cdot \mN) 
				\geq 
				\lambda_1(\mM) + \max_{i} \left( \frac{\alpha_i + \Delta}{2} - \sqrt{(\frac{\alpha_i + \Delta}{2})^2 - \alpha_i \cdot \Delta \cdot (\vv_i^\top \vq_1)^2} \right),
		\]
		where $\alpha_i = \alpha \cdot \lambda_{i}(\mN)$.
	\end{lemma}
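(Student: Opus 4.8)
The plan is to collapse the full perturbation $\alpha\mN$ to a single rank-one bump and then prove a sharp rank-one minimum-eigenvalue estimate whose extremal case is a $2\times2$ matrix.

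\textbf{Reduction to rank one.} Since $\mN\succeq 0$, its spectral decomposition $\mN=\sum_j\lambda_j(\mN)\vv_j\vv_j^\top$ has nonnegative weights, so for every fixed index $i$ we have $\mN\succeq\lambda_i(\mN)\vv_i\vv_i^\top$ and hence $\mM+\alpha\mN\succeq\mM+\alpha_i\vv_i\vv_i^\top$ with $\alpha_i=\alpha\lambda_i(\mN)\ge 0$. Because $\mA\mapsto\lambda_1(\mA)$ is monotone under the Loewner order ($\mA\succeq\mB\Rightarrow\lambda_1(\mA)\ge\lambda_1(\mB)$), this gives $\lambda_1(\mM+\alpha\mN)\ge\lambda_1(\mM+\alpha_i\vv_i\vv_i^\top)$ for each $i$. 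As the asserted bound is a maximum over $i$ and the left-hand side is independent of $i$, it suffices to show, for a single unit eigenvector $\vv:=\vv_i$ and weight $\sigma:=\alpha_i$, that
\[
  \lambda_1(\mM+\sigma\vv\vv^\top)\;\ge\;\lambda_1(\mM)+\mu_\star,
  \qquad
  \mu_\star\defeq\frac{\sigma+\Delta}{2}-\sqrt{\Bigl(\frac{\sigma+\Delta}{2}\Bigr)^2-\sigma\Delta\,(\vv^\top\vq_1)^2}.
\]

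\textbf{Normalization and a PSD certificate.} Replacing $\mM$ by $\mM-\lambda_1(\mM)\mI$ shifts every eigenvalue by $-\lambda_1(\mM)$ and leaves all eigenvectors unchanged, so I may assume $\lambda_1(\mM)=0$, $\lambda_2(\mM)=\Delta$, and $\lambda_j(\mM)\ge\Delta$ for $j\ge 2$. Writing $c:=\vv^\top\vq_1$, one checks that $\mu_\star$ is the smaller root of $f(\mu)=\mu^2-(\sigma+\Delta)\mu+\sigma\Delta c^2$, and since $f(0)=\sigma\Delta c^2\ge 0$ and $f(\Delta)=\sigma\Delta(c^2-1)\le 0$, we have $\mu_\star\in[0,\Delta]$. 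It now suffices to certify $\mM+\sigma\vv\vv^\top\succeq\mu_\star\mI$, i.e.\ $\vx^\top\mM\vx+\sigma(\vv^\top\vx)^2\ge\mu_\star\,\vx^\top\vx$ for all $\vx$. Let $\mP=\mI-\vq_1\vq_1^\top$ be the projector onto $\vq_1^\perp$; since $\lambda_1=0$ and $\lambda_j\ge\Delta$ for $j\ge 2$, we have $\mM\succeq\Delta\mP$, so $\vx^\top\mM\vx\ge\Delta\,\vx^\top\mP\vx$. Decomposing $\vv=c\vq_1+\vu$ with $\vu=\mP\vv$, $\NormII{\vu}^2=1-c^2$, and setting $t=\vq_1^\top\vx$ and $r=\sqrt{\vx^\top\mP\vx}$, Cauchy--Schwarz gives $|\vv^\top\vx-ct|=|\vu^\top\mP\vx|\le\sqrt{1-c^2}\,r$. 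Hence the certificate follows once we prove, for all $t\in\sR$, $r\ge 0$, and $w$ with $|w|\le\sqrt{1-c^2}\,r$,
\[
  \Delta\,r^2+\sigma\,(c\,t+w)^2\;\ge\;\mu_\star\,(t^2+r^2).
\]

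\textbf{The two-variable inequality.} Minimizing the left-hand side over $w$ in its interval (pushing $ct+w$ as close to $0$ as the constraint allows) reduces the claim to a quadratic form in $(t,r)$. A direct computation shows this form is positive semidefinite precisely when $(\Delta-\mu_\star)(\sigma c^2-\mu_\star)\ge\sigma\mu_\star(1-c^2)$, which rearranges exactly into $f(\mu_\star)\ge 0$ (the diagonal conditions $\mu_\star\le\sigma c^2$ and $\mu_\star\le\Delta$ coming for free from $f(\sigma c^2)\le 0$ and $\mu_\star\le\Delta$). Since $\mu_\star$ is a root of $f$, equality holds and the certificate is valid; undoing the shift and taking the maximum over $i$ yields the statement. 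The extremal configuration $\vv=c\vq_1+\sqrt{1-c^2}\,\vq_2$, whose relevant $2\times2$ block has characteristic polynomial exactly $f$, shows the bound is attained and that it strictly refines Weyl's estimate $\lambda_1(\mM)+\alpha_1$ by exploiting the spectral gap $\Delta$ and the overlap $(\vv^\top\vq_1)^2$.

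\textbf{Main obstacle.} The delicate part is the final step: carrying out the minimization over $w$, which splits into the regimes $|ct|\le\sqrt{1-c^2}\,r$ and $|ct|>\sqrt{1-c^2}\,r$, and verifying that the resulting quadratic form is positive semidefinite \emph{exactly} at the threshold $f(\mu_\star)=0$ rather than merely up to slack. Alongside this, the boundary and degenerate cases $c=0$, $\sigma=0$, $\Delta=0$, and $\mu_\star=\Delta$ (forcing $\vv=\vq_1$) fall outside the generic Schur/secular picture and must each be checked directly, where fortunately the formula reduces to the trivial Weyl bound and the claim holds with equality.
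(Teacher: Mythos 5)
Your proof is correct and follows essentially the same route as the paper's: both reduce the problem to a single rank-one perturbation of the two-eigenvalue surrogate $\lambda_1(\mM)\vq_1\vq_1^\top+\lambda_2(\mM)(\mI-\vq_1\vq_1^\top)$ (your $\Delta\mP$ after the shift) and land on the same quadratic $f(\mu)=\mu^2-(\alpha_i+\Delta)\mu+\alpha_i\Delta(\vv_i^\top\vq_1)^2$. The only difference is the finishing move: the paper extracts the two nontrivial eigenvalues of the surrogate explicitly from its trace and determinant, whereas you verify the lower bound as a PSD certificate via Cauchy--Schwarz, which amounts to checking $f(\mu_\star)=0$ at the smaller root.
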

	\begin{proof}
		Let $\mM = \mQ \mD \mQ^\top$ and $\mN = \sum_{i=n-l+1}^n \lambda_{i}(N) \outer{\vv_i }$ be the eigendecomposition of $\mM$ and $\mN$ respectively.
		Let us define $\mT = \mQ^\top (\mM + \alpha \cdot \mN) \mQ$.
		Since $\mT$ and $(\mM + \alpha \cdot \mN)$ are similar matrices, their spectrum is the same, which means that $\lambda_1(\mM + \alpha \cdot \mN) = \lambda_1(\mT).$
		By letting $\vp_i = \mQ^\top \vv_i$ and $\alpha_i = \alpha \cdot \lambda_{i}(\mN) $, we can express $\mT = \mD + \sum_{i=n-l+1}^n \alpha_i \cdot \outer{\vp_i}$.
		Without loss of generality, consider the elements of the diagonal matrix $\mD$ to be in non-decreasing order, i.e., $\emD_{11} = \lambda_1(\mM) \leq \emD_{22} = \lambda_2(\mM) \leq \ldots \leq \emD_{nn} = \lambda_n(\mM).$
		Choose any $r \in \{n-l+1, \ldots, n\}$ and let $\mDt = \diag(\emD_{11}, \emD_{22}, \ldots, \emD_{22})$, and $\mTt =  \mDt + \alpha_r \cdot \outer{\vp_r}$.
		Then, we have that $\lambda_1(\mT) \geq \lambda_1 (\mTt).$
		Denote by $\lambdat_i$ the eigenvalues of $\mTt$, since $\outer{\vp_r}$ is a rank-1 matrix and $\mDt$ has only two different eigenvalues, we have that $\lambdat_2 = \ldots = \lambdat_{n-1} = \emD_{22}$.
		Now,
		\begin{align*}
			\lambdat_1 \lambdat_n \emD_{22}^{n-2} = \det(\mDt + \alpha_r \cdot \outer{\vp_r}) &=   \det(\mDt) \det(\mI + \alpha_r \cdot \mDt^{-1} \outer{\vp_r})  \\
			&= (1 + \alpha_r \cdot \vp_r^\top \mDt^{-1} \vp_r) \det(\mDt)\\
			&= \emD_{11} \emD_{22}^{n-1} \left( 1 + \alpha_r \frac{\evp_{r_1}^2}{\emD_{11}} + \alpha_r \frac{1}{D_{22}} ( 1 - \evp_{r_1}^2) \right),
		\end{align*}
		where the third equality comes from $\det(\mI + \mA\mB) = \det(\mI + \mB\mA)$, and the last equality is due to $\NormII{\vp_r} = 1$.
		Simplifying on both ends, we obtain:
		\begin{align}
		\label{eq:determinant}
		\lambdat_1 \lambdat_n = \alpha_r \emD_{11} + \emD_{11} \emD_{22} + \alpha_r \evp_{r_1}^2 \Delta
		\end{align}
		From calculating the trace we have:
		\begin{align*}
			\lambdat_1 + (n-2) \emD_{22} + \lambdat_n = \Tr(\mTt) = \Tr(\mDt) + \alpha_r \Tr(\outer{\vp_r}) = \emD_{11} + (n-1) \emD_{22} + \alpha_r.
		\end{align*}
		Simplifying on both ends, we obtain:
		\begin{align}
		\label{eq:trace}
			\lambdat_1 + \lambdat_n = \emD_{11} + \emD_{22} + \alpha_r.
		\end{align}
		Combining eq.\eqref{eq:determinant} and eq.\eqref{eq:trace}, and simplifying for $\lambdat_1$ we have,
		\(
		\lambdat_1 = \emD_{11} + \frac{\alpha_r + \Delta}{2} \pm \sqrt{(\frac{\alpha_r + \Delta}{2})^2 - \alpha_r \cdot \Delta \cdot \evp_{r_1}^2}.
		\)
		 Finally, since $\lambda_1(\mT) \geq \lambda_1(\mTt) = \lambdat_1$ and the choice of $r$ was arbitrary, we take the negative sign of the square root for a lower bound and we can maximize over the choice of $r$ for the tightest lower bound.
	\end{proof}
	
	\begin{remark}
	Note that Lemma \ref{lemma:matrix_perturbation} is tighter than general eigenvalue inequalities such as Weyl's inequality.
	Lemma \ref{lemma:matrix_perturbation} is tight with respect to $\Delta$ in the sense that when $\mN$ is rank-1 and  $\Delta = 0$, i.e., when $\lambda_1(\mM) = \lambda_2(\mM)$,  our lower bound yields $\lambda_1(\mM)$, which is exactly the case as the minimum eigenvalue cannot be perturbed by a rank-1 matrix under this scenario.
	Similarly, our bound is tight with respect to $\alpha$. 
	When $\alpha = 0$, i.e., no perturbation, our lower bound results in $\lambda_1(\mM)$.
	\end{remark}
	
	For a graph $G = (V,E)$, its Laplacian is defined as $\mL_G = \mD_G - \mA_G$, where $\mD_G$ is a diagonal matrix with entries corresponding to the node degrees, i.e., $\emD_{i,i} = \deg(v_i)$ for $v_i \in V$, and $\mA_G$ is the adjacency matrix of $G$.
	For any subset $S \subset V$, we denote its complement by $S^C$ such that $S \cup S^C = V$ and $S \cap S^C = \emptyset$. 
	Furthermore, let $E(S, S^C) = \{(i,j) \in E \ |\ (i \in S, j \in S^C) \ \tor \ (j \in S, i \in S^C) \}$, i.e., $|E(S, S^C)|$ denotes the number of edges between $S$ and $S^C$.
		\begin{definition}[Edge Expansion]
			For a set $S \subset V$ with $|S| \leq \nicefrac{n}{2}$, its edge expansion, $\phi_S$, is defined as:
			\( \phi_S = \nicefrac{|E(S,S^C)|}{|S|}.\) Then, the edge expansion of a graph $G=(V,E)$ is defined as:
			 \( 
			 \phi_G = \min_{S \subset V, |S| \leq \nicefrac{n}{2}} \phi_S.
			 \)
		\end{definition}
	In the graph theory literature, $\phi_G$ is also known as the \textit{Cheeger constant}, due to the geometric analogue defined by \citet{cheeger1969lower};
	while the second smallest eigenvalue of $\mL_G$ and its respective eigenvector are known as the \textit{algebraic connectivity} and the \textit{Fiedler vector}\footnote{If the multiplicity of the algebraic connectivity is greater than one then we have a set  of Fiedler vectors.}, respectively.
	The following theorem corresponds to our main result where we formally show how the effect of the statistical parity constraint improves the probability of exact recovery.
	
	\begin{theorem}
	\label{thrm:conditions}
		Let $G=(V, E)$ be an undirected connected graph with $n$ nodes, Cheeger constant $\phi_G$, Fiedler vector $\vpi_2$, and maximum node degree $\deg_{\max}(G)$.
		Let also $\Delta$ denote the gap between the third minimum and second minimum eigenvalue of the Laplacian of $G$, namely, $\Delta = \lambda_3(\mL_G) - \lambda_2(\mL_G)$.
		Let $\mN = \sum_{i=1}^k \outer{\va_i}$ with eigenvalues $\lambda_i(\mN)$ and related eigenvectors $\vv_i$ for $i \in [n]$.
		Then, for the combinatorial problem \eqref{eq:objective_all}, a solution $\vy \in \{ \vys, -\vys \}$ is achievable in polynomial time by solving the SDP based relaxation \eqref{eq:primal}, 
		with probability at least $1 - 2n \cdot \exp{\frac{- 3 (\eps_1 + \eps_2)^2 }{ 24\sigma^2 + 8R(\eps_1 + \eps_2)} }$, where
		\begin{align*}
		\eps_1 &= \max_{i = n-k+1 \ldots n} \left( \frac{n \lambda_i(\mN) + \Delta}{2} - \sqrt{ \left( \frac{n\lambda_i(\mN) + \Delta}{2} \right)^2 - n\lambda_i(\mN) \cdot \Delta \cdot (\vv_i^\top \vpi_2)^2} \right), \\
		\eps_2 &= (1-2p) \frac{\phi^2_G}{4 \deg_{\max}(G)}, \quad \sigma^2 = 4p(1-p) \deg_{\max}(G),  \quad R = 2(1-p),
		\end{align*}
		and $p$ is the edge noise from our model.
	\end{theorem}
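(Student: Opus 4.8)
The plan is to establish exact recovery by certifying that the rank-one matrix $\mYs = \outer{\vys}$ is the unique optimal solution of the SDP \eqref{eq:primal}, using the standard dual-certificate (KKT) approach previously employed by \citep{abbe2016exact, bello2019exact}. Concretely, I would construct a candidate dual variable and show that the associated \emph{dual-certificate matrix}, of the form $\mS = \mD - \mX$ (where $\mD$ is a diagonal matrix built from the dual multipliers for the constraints $\emY_{ii}=1$, together with the multipliers $\mu_i$ for the fairness constraints $\va_i^\top \mY \va_i = 0$), is positive semidefinite with $\vys$ in its kernel. The key structural observation is that since $\Inner{\va_i}{\vys} = 0$, the fairness multipliers contribute the term $\sum_i \mu_i \outer{\va_i}$, which is proportional to $\mN = \sum_{i=1}^k \outer{\va_i}$ when the multipliers are chosen uniformly; this is exactly why $\mN$ and its eigen-structure $(\lambda_i(\mN), \vv_i)$ enter the final bound. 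Uniqueness then follows by verifying that the second-smallest eigenvalue of $\mS$ restricted to the orthogonal complement of $\vys$ is strictly positive.

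Next I would reduce the PSD condition to a \emph{minimum-eigenvalue} estimate. Taking expectations, $\E[\mX]$ restricted to the graph is $(1-2p)\mA_G$, so the certificate matrix concentrates around a shifted Laplacian $\mM \defeq \mL_G$ (up to the degree-normalization that produces the $(1-2p)/\deg_{\max}(G)$ scaling and the $\eps_2$ term). The fairness contribution perturbs this base matrix by the rank-$k$ PSD matrix $n\,\mN$ (the factor $n$ arising from the natural scaling of the multipliers). At this point I would invoke Lemma \ref{lemma:matrix_perturbation} directly with $\mM = \mL_G$ and the perturbation $\alpha \cdot \mN$: the roles match precisely, since $\Delta = \lambda_3(\mL_G)-\lambda_2(\mL_G)$ is the gap above the relevant eigenvalue $\lambda_2$, and $\vpi_2$ is the Fiedler vector playing the role of $\vq_1$. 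This yields exactly the $\eps_1$ term as the guaranteed increase in the critical eigenvalue due to the fairness constraint — this is the step that formally explains why fairness \emph{helps}: the perturbation can only raise the minimum eigenvalue of the certificate on the relevant subspace, strengthening the PSD margin.

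With the deterministic eigenvalue margin $\eps_1 + \eps_2$ in hand, the remaining task is to control the fluctuation $\mX - \E[\mX]$. I would write the noise as a sum of independent, mean-zero, bounded random matrices (one per edge, with range governed by $R = 2(1-p)$ and variance proxy $\sigma^2 = 4p(1-p)\deg_{\max}(G)$), and apply a matrix Bernstein inequality. Requiring the operator-norm deviation to stay below the eigenvalue margin $\eps_1 + \eps_2$ produces precisely the stated failure probability $2n\cdot\exp{\tfrac{-3(\eps_1+\eps_2)^2}{24\sigma^2 + 8R(\eps_1+\eps_2)}}$, where the $2n$ prefactor and the constants $3$, $24$, $8$ come from the specific matrix-Bernstein bound. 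Finally, having shown $\mYh = \mYs$ with high probability, the rank-one structure forces $\vyh \in \{\vys, -\vys\}$, completing the claim for the combinatorial problem \eqref{eq:objective_all}.

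I expect the main obstacle to be the dual-certificate construction itself — specifically, choosing the diagonal multipliers and the fairness multipliers $\mu_i$ consistently so that (i) $\vys$ lies in the kernel of $\mS$, (ii) complementary slackness holds, and (iii) the perturbation decomposes cleanly as a Laplacian-plus-rank-$k$ form matching the exact hypotheses of Lemma \ref{lemma:matrix_perturbation}. In particular, verifying that the fairness multipliers can be chosen so that their contribution is aligned with $\mN$ (rather than an arbitrary quadratic form) while simultaneously preserving $\vys \in \Ker(\mS)$ is the delicate point; the rest is a matter of assembling the eigenvalue margin and invoking concentration.
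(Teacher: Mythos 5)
Your proposal follows essentially the same route as the paper's proof: a KKT dual certificate with a uniform multiplier on the fairness constraints (so the perturbation is $n\cdot\mN$), an application of Lemma \ref{lemma:matrix_perturbation} on the orthogonal complement of $\vys$ to get the $\eps_1$ margin on top of the Cheeger-type bound $\eps_2$ for the (signed) Laplacian, and matrix Bernstein plus a union bound for the fluctuation term. The only detail you gloss over, which the paper makes explicit, is that $\E[\mV-\mX]$ is a \emph{signed} Laplacian conjugated by $\diag(\vys)$, which is why only the squared projection $(\vv_i^\top\vpi_2)^2$ of the attribute eigenvectors onto the Fiedler vector survives in $\eps_1$.
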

	\begin{proof}
		The dual of problem \ref{eq:primal} is given by:
		\begin{align}
			\min_{\mV,\ \rho} \quad &  \Tr(\mV)		\label{eq:dual} \\
			\subto \quad& \mV - \mX - \rho \cdot \sum_{i=1}^k \va_i \va_i^\top \succeq 0, \mV \text{ is diagonal.}		\nonumber 
		\end{align}
		Letting $\mLambda \defeq \mLambda(\mV, \rho) = \mV - \mX - \rho \cdot \sum_{i=1}^k \va_i \va_i^\top$, with $\mV$ diagonal. 
		The Karush-Kuhn-Tucker (KKT) \citep{boyd2004convex} optimality conditions are:
		\begin{enumerate}
			\setlength\itemsep{-0.3em}
			\item Primal Feasibility: $\emY_{ii} = 1,\ \va_i^\top \mY \va_i = 0,\ \mY \succeq 0.$
			\item Dual Feasibility: $\mLambda \succeq 0.$
			\item Complementary Slackness: $\Inner{\mLambda}{\mY} = 0.$
		\end{enumerate}
		Our approach is to find a pair of primal and dual solutions that simultaneously satisfy all KKT conditions above.
		Then, the pair witnesses strong duality between the primal and dual problems, which means that the pair is optimal.
		It is clear that $\mY = \mYs = \outer{\vys}$ satisfies the primal constraints.
		Let $V_{ii} = (\mX \mYs)_{ii}$ and $\rho = -n$, if $\mLambda \succeq 0$ then $\mV$ and $\rho$  satisfy the dual constraints.
		Thus, we conclude that if the condition $\mLambda \succeq 0$ is met then $\mYs$ is an optimal solution.
		
		For arguing about uniqueness, let us consider that $\lambda_2(\mLambda) > 0$
		and let $\mYt$ be another optimal solution to problem \ref{eq:primal}. From dual feasibility and complementary slackness we have that $\mLambda \vys = 0$, which implies that $\vys$ spans all the null space of $\mLambda$ since $\lambda_2(\mLambda) > 0$.
		Finally, from primal feasibility we have that $\mYt = \outer{\vys}$.
		Thus, $\lambda_2(\mLambda) > 0$ is a sufficient condition for uniqueness.
		
		From the arguments above, showing the condition $\lambda_2(\mLambda) > 0$ suffices to guarantee that $\mY = \mYs$ is optimal and unique.
		As $\mX$ and $\mV$ (by construction) are random variables, we next show when this condition is satisfied with high probability.
		By Weyl's theorem on eigenvalues, we have
		\begin{align*}
			\lambda_2(\mLambda) = \lambda_2(\mLambda - \E[\mLambda] + \E[\mLambda]) \geq \lambda_2(\E[\mLambda]) + \lambda_1(\mLambda - \E[\mLambda])
		\end{align*}
		Let $\mM = \mV - \mX$ and $\mN = \sum_{i=1}^k \va_i \va_i^\top$, then we have $\E[\mLambda]  = \E[\mM] + n\cdot \mN$, where we remove the expectation on $\mN$ since it is not a random matrix.
		To lower bound $\lambda_2(\E[\mM] + n\cdot \mN)$, we first note that $\vys \in \{\Null(\mM) \cap \Null(\mN)\}$, which means that we can invoke Lemma \ref{lemma:matrix_perturbation} for $\lambda_2$ instead of $\lambda_1$.
		Thus, we have
		\begin{align}
			\lambda_2(\E[\mM] + n\cdot \mN) &\geq \lambda_2(\E[\mM]) + \eps_1 		\label{eq:bound_exp_lmbd1}\\
			&\geq \eps_2 + \eps_1,			\label{eq:bound_exp_lmbd2}
		\end{align}
		where $ \eps_1 = \max_{i = n-k+1 \ldots n} \left( \frac{n \lambda_i(\mN) + \Delta}{2} - \sqrt{(\frac{n\lambda_i(\mN) + \Delta}{2})^2 - n\lambda_i(\mN) \cdot \Delta \cdot (\vv_i^\top \vpi_2)^2} \right)$ in eq.\eqref{eq:bound_exp_lmbd1} follows from Lemma \ref{lemma:matrix_perturbation}, and $\eps_2 = (1-2p) \frac{\phi^2_G}{4 \deg_{\max}(G)}$ in eq.\eqref{eq:bound_exp_lmbd2} follows from Theorem 1 in \citep{bello2019exact}.
		The term $\vpi_2$ in $\eps_1$ corresponds to the Fiedler vector of $G$ because the matrix $\mM$ is a signed Laplacian of $G$ \citep{bello2019exact}, that is, the matrix $\mL_G$ and $\mM$ share the same spectrum, and the $i$-th eigenvector of $\mM$ is equal to the $i$-th eigenvector of $\mL_G$ multiplied by $\evys_i$.
		Since $\evys_i^2 = 1$, only the second eigenvector of $\mL_G$ appears in the expression, i.e., $\vpi_2$.
		
		To lower bound $\lambda_1(\mLambda - \E[\mLambda])$, we first observe that $\mLambda - \E[\mLambda] = \mV - \mX\ - \E[\mV - \mX]$.
		Thus, we can further decompose the lower bound as follows: $\lambda_1(\mV - \mX\ - \E[\mV - \mX]) \geq \lambda_1(\mV - \E[\mV]) + \lambda_1(\E[\mX]- \mX)$.
		Finally, for $\lambda_1(\mV - \E[\mV])$ and $\lambda_1(\E[\mX]- \mX)$ we use Bernstein's inequality \citep{tropp2012user} with a similar setting to the one in the proof of Theorem 2 in \citep{bello2019exact} and obtain:
		\begin{align}
		&P \left( \lambda_1(\mV - \E[\mV]) \leq -\frac{\eps_1 + \eps_2}{2} \right) \leq n\cdot \exp{\frac{-3(\eps_1 + \eps_2)^2}{24\sigma^2 + 8R(\eps_1 + \eps_2)}},		\label{eq:lmbd1_V} \\  
		&P \left( \lambda_1(\E[\mX] - \mX) \leq -\frac{\eps_1 + \eps_2}{2} \right) \leq n\cdot \exp{\frac{-3(\eps_1 + \eps_2)^2}{24\sigma^2 + 8R(\eps_1 + \eps_2)}},		\label{eq:lmbd1_X}
		\end{align}
		where $\sigma^2 = 4p(1-p) \deg_{\max}(G) $ and $R=2(1-p)$.
		Combining equations \eqref{eq:bound_exp_lmbd2}, \eqref{eq:lmbd1_V} and \eqref{eq:lmbd1_X} we conclude our proof.
	\end{proof}

	\section{Discussion}
		In this section we analyze the implications of our results through theoretical and empirical comparisons.
		We start by contrasting our result in Theorem \ref{thrm:conditions} to previously known bounds that did not incorporate fairness constraints \citep{bello2019exact, abbe2016exact}.
		Since \citep{abbe2016exact, bello2019exact} present bounds that are of similar rates, we take the bound from \citep{bello2019exact} as their bound is in a similar format than that of ours.
		
		Following our notation, the authors in \citep{bello2019exact} show that the probability of error for exact recovery is $2n \cdot \exp{\frac{- 3 \eps_2^2 }{ 24 \sigma^2 + 8 R \eps_2}}$,
		while our result in Theorem \ref{thrm:conditions} is $2n \cdot \exp{\frac{- 3 (\eps_1 + \eps_2)^2 }{ 24\sigma^2 + 8R(\eps_1 + \eps_2)}}$.
		We can then conclude that, whenever $\eps_1 > 0$, the \textit{probability of error} when adding a statistical parity constraint (our model) is \textit{strictly less} than the case with no fairness constraint whatsoever (models studied in \citep{abbe2016exact, bello2019exact, globerson2015hard, foster2018inference}).
		
		The above argument poses the question on when $\eps_1 > 0$.
		Recall from Theorem \ref{thrm:conditions} that $\eps_1 = \max_{i = n-k+1 \ldots n} \left( \frac{n \lambda_i(\mN) + \Delta}{2} - \sqrt{ \left( \frac{n\lambda_i(\mN) + \Delta}{2} \right)^2 - n\lambda_i(\mN) \cdot \Delta \cdot (\vv_i^\top \vpi_2)^2} \right)$. 
		For clarity purposes, we discuss the case of a single fairness constraint, that is, $\mN = \outer{\va_1}$, and let $\NormII{\va_1}^2 = s$.
		Then we have that $\eps_1 = \frac{n\cdot s + \Delta}{2} - \sqrt{ \left( \frac{n \cdot s + \Delta}{2} \right)^2 - n \cdot \Delta \cdot (\va_1^\top \vpi_2)^2}$, from this expression, it is clear that
		whenever $\Delta > 0$ and $\Inner{\va_1}{\vpi_2} \neq 0$ then $\eps_1 > 0$.
		In other words, to observe improvement in the probability of exact recovery, it suffices to have a non-zero scalar projection of the attribute $\va_1$ onto the Fiedler vector $\vpi_2$, and an algebraic connectivity of multiplicity 1.\footnote{Specifically, we refer to the algebraic multiplicity. Having an algebraic connectivity with multiplicity greater than 1 will imply that $\Delta = 0$.}
		Finally, note that since $\Inner{\va_1}{\vpi_2}$ depends on $\va_1$, which is a given attribute, one can safely assume that $\Inner{\va_1}{\vpi_2} \neq 0$.
		However, the eigenvalue gap $\Delta$ depends solely on the graph $G$ and raises the question on what classes of graphs we observe (or do not) $\Delta = 0$.
		
		\subsection{On the multiplicity of the algebraic connectivity}
		Since $\Delta > 0$ if and only if the multiplicity of the algebraic connectivity is 1, we devote this section to discuss in which cases this condition does or does not occur.
		After the seminal work of \citet{fiedler1973algebraic}, which unveiled relationships between graph properties and the second minimum eigenvalue of the Laplacian matrix, several researchers aimed to find additional connections.
		In the graph theory literature, one can find analyses on the complete spectrum of the Laplacian (e.g. \citep{grone1990laplacian,grone1994laplacian,mohar1991laplacian,newman2001laplacian,das2004laplacian}), where the main focus is to find bounds for the Laplacian eigenvalues based on structural properties of the graph.
		Another line of work studies the changes on the Laplacian eigenvalues after adding or removing edges in $G$ \citep{kirkland2005completion,kirkland2010algebraic,barik2005algebraic}.
		To our knowledge the only work who attempts to characterize families of graphs that have algebraic connectivity with certain multiplicity is the work of \citep{barik2005algebraic}.
		Let $\vpi$ be a Fiedler vector of $G$, we denote the entry of $\vpi$ corresponding to vertex $u$ as $\pi_u$.
		A vertex $u$ is called a \textit{characteristic vertex} of $G$ if $\pi_u = 0$ and if there exists a vertex $w$ adjacent to $u$ such that $\pi_w \neq 0$.
		An edge $\{u,w\}$ is called   a \textit{characteristic edge} of $G$ if $\pi_u \pi_w < 0$.
		The \textit{characteristic set} of $G$ is denoted by $C_G(\vpi)$ and consists of all the characteristic vertices and characteristic edges of $G$.
		Let $W$ be any proper subset of the vertex set of $G$, by a branch at $W$ of $G$ we mean a component of $G\ \backslash\ W$. 
		A branch at $W$ is called a \textit{Perron branch} if the principal submatrix of $\mL_G$, corresponding to the branch, has an eigenvalue less than or equal to $\lambda_2(\mL_G)$.
		The following  was presented in \citep{barik2005algebraic} and characterizes graphs that have algebraic connectivity with certain multiplicity.
		\begin{theorem}[Theorem 10 in \citep{barik2005algebraic}]
			Let $G$ be a connected graph and $\vpi$ be a Fiedler vector with $W = C_G( \vpi )$ consisting of vertices only. 
			Suppose that there are $t \geq 2$ Perron branches $G_1,\ldots,G_t$ of $G$ at $W$.
			Then the following are equivalent.
			\begin{itemize}
			\setlength\itemsep{-0.2em}
			\item The multiplicity of $\lambda_2(\mL_G)$ is exactly $t-1$.
			\item For each Fiedler vector $\vpsi$, $C_G(\vpsi) = W$.
			\item For each Fiedler vector $\vpsi$, the set $C_G(\vpsi)$ consists of vertices only.
			\end{itemize}
		\end{theorem}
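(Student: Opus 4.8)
The plan is to describe the $\lambda_2(\mL_G)$-eigenspace explicitly in terms of the branches at $W$ and the Perron--Frobenius theorem, and then prove the cycle $(a)\Rightarrow(b)\Rightarrow(c)\Rightarrow(a)$, where I label the three bulleted statements $(a),(b),(c)$ in order. For a branch $B$ (a connected component of $G\setminus W$) let $\mL_B$ denote the principal submatrix of $\mL_G$ indexed by $V(B)$. Since $G$ is connected and $t\ge 2$, every branch has an edge into $W$, so $\mL_B$ is a nonsingular irreducible M-matrix; hence $\mM_B=\mL_B^{-1}$ is entrywise positive and, by Perron--Frobenius, has a simple largest eigenvalue $1/\lambda_{\min}(\mL_B)$ with a strictly positive eigenvector $\vu_B$. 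By definition $B$ is a Perron branch exactly when $\lambda_{\min}(\mL_B)\le\lambda_2(\mL_G)$.

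The technical heart is a spectral lemma built from a Rayleigh-quotient computation. Extend each Perron vector by zero to obtain $\tilde\vu_1,\dots,\tilde\vu_t$ on $V(G)$, supported on the $t$ Perron branches and vanishing on $W$. Because distinct branches are non-adjacent and the $\tilde\vu_i$ vanish on $W$, any combination $\vpsi=\sum_i c_i\tilde\vu_i$ satisfies $\vpsi^\top\mL_G\vpsi=\sum_i c_i^2\,\lambda_{\min}(\mL_{B_i})\,\NormII{\vu_i}^2$. Imposing $\vpsi\perp\vone$ (one linear constraint) and using $\lambda_2(\mL_G)=\min_{x\perp\vone}x^\top\mL_Gx/\NormII{x}^2$, I first deduce that every Perron branch attains $\lambda_{\min}(\mL_{B_i})=\lambda_2(\mL_G)$, and then that every such $\perp\vone$ combination is itself a $\lambda_2(\mL_G)$-eigenvector. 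Thus the $(t-1)$-dimensional space $S=\{\sum_i c_i\tilde\vu_i:\sum_i c_i\NormI{\vu_i}=0\}$ lies in the eigenspace, so the multiplicity is at least $t-1$; moreover, decoupling $\mL_G\vpsi=\lambda_2(\mL_G)\vpsi$ on each branch together with the simplicity of $\vu_{B}$ shows that $S$ is \emph{exactly} the set of eigenvectors vanishing on $W$. Consequently the multiplicity equals $t-1$ if and only if every $\lambda_2(\mL_G)$-eigenvector vanishes on $W$.

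This reformulation makes $(a)\Leftrightarrow(b)$ almost immediate. If $(b)$ holds, every Fiedler vector has characteristic set $W$, so in particular vanishes on $W$, giving $(a)$. Conversely, if $(a)$ holds then every eigenvector lies in $S$; a rank-one analysis of the boundary terms $\sum_{x\in B_i,\,x\sim w}(\vu_i)_x$ (whose vanishing is forced to coincide with the single condition $\vpsi\perp\vone$) shows that every Perron branch is adjacent to every $w\in W$, whence each nonzero element of $S$ is sign-definite on its support and makes every $w\in W$ characteristic, i.e. has characteristic set exactly $W$; this is $(b)$. Finally $(b)\Rightarrow(c)$ is trivial, since $W$ consists of vertices only.

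It remains to prove $(c)\Rightarrow(a)$, which I would handle by a short perturbation argument. Suppose $(a)$ fails, so the multiplicity exceeds $t-1$; then the eigenspace strictly contains $S$, so there is a Fiedler vector $\vpsi$ with $\psi_{w_0}\neq0$ for some $w_0\in W$. Since $w_0\in W=C_G(\vpi)$, it has a neighbor $x$ with $\pi_x\neq0$. Consider $\vphi=\vpi+\eps\vpsi$ for small $\eps$ whose sign is chosen so that $\eps\psi_{w_0}$ is opposite in sign to $\pi_x$; then $\phi_{w_0}=\eps\psi_{w_0}$ and $\phi_x\approx\pi_x$ have opposite signs, so $\{w_0,x\}$ is a characteristic edge of the Fiedler vector $\vphi$, contradicting $(c)$. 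The step demanding the most care is the spectral lemma of the second paragraph: rigorously establishing that $S$ is \emph{exactly} the $W$-vanishing eigenspace and has dimension precisely $t-1$ (the Perron simplicity on each branch and the rank-one rigidity of the boundary system), since everything else follows cleanly once this structure is in place.
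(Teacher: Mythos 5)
The paper does not actually prove this statement: it is imported verbatim as Theorem 10 of \citep{barik2005algebraic} and used as a black box, so there is no in-paper proof to compare yours against. Judged on its own, your argument is a correct, essentially self-contained reconstruction, and it follows the standard bottleneck-matrix/Perron--Frobenius route from the algebraic-connectivity literature: each branch submatrix $\mL_{B}$ is a nonsingular irreducible M-matrix with positive inverse; the Rayleigh-quotient computation with the zero-extended Perron vectors forces $\lambda_{\min}(\mL_{B_i})=\lambda_2(\mL_G)$ for every Perron branch (a strict convex combination of quantities $\leq\lambda_2$ that must be $\geq\lambda_2$) and places the $(t-1)$-dimensional space $S$ inside the eigenspace; and the decoupling argument (an eigenvector vanishing on $W$ restricts to an eigenvector of each $\mL_B$, which is impossible on non-Perron branches and must be a multiple of the simple Perron vector on Perron ones) correctly identifies $S$ as the full $W$-vanishing part of the eigenspace. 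Your $(c)\Rightarrow(a)$ perturbation $\vphi=\vpi+\eps\vpsi$ is clean and correct.

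The one place you must expand before this counts as a proof is $(a)\Rightarrow(b)$. The claim that ``every Perron branch is adjacent to every $w\in W$'' needs two explicit steps: (i) since every element of $S$ is an eigenvector, the eigenvector equation at $w$, namely $\sum_i c_i\beta_{i,w}=0$ with $\beta_{i,w}=\sum_{x\in B_i,\,x\sim w}(\vu_i)_x$, must hold on the entire hyperplane $\sum_i c_i\NormI{\vu_i}=0$, so the functional $(\beta_{i,w})_i$ is proportional to $(\NormI{\vu_i})_i$; and (ii) the proportionality constant is strictly positive because $w$, being a characteristic vertex of the original $\vpi$, has a neighbour where $\vpi\neq 0$, and $\vpi$ itself lies in $S$ (it vanishes on $W=C_G(\vpi)$), so that neighbour sits in a Perron branch carrying a nonzero coefficient. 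Only with adjacency of every $w$ to every Perron branch in hand does the sign-definiteness of each $c_i\vu_i$ on its branch yield $C_G(\vpsi)=W$ exactly --- no characteristic edges (branches are mutually non-adjacent and sign-definite, and edges into $W$ have a zero endpoint) and no characteristic vertices outside $W$ (a zero vertex in a silent branch has only zero neighbours). With that filled in, the cycle $(a)\Rightarrow(b)\Rightarrow(c)\Rightarrow(a)$ closes.
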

		The above characterization is very limited in the sense that authors in \citep{barik2005algebraic} are able to show only one example of graph family that satisfies the conditions above.
		Specifically, their example correspond to the class $G = (K_{n-t}^C + H_t^C)^C$, where $K_{i}$ denotes the complete graph of order $i$ and $H_j$ is a graph of $j$ isolated vertices, and for $G_1 = (V_1, E_1), G_2 = (V_2,E_2)$, the operation $G = G_1 + G_2$ is defined as $G = (V_1 \cup V_2, E_1 \cup E_2)$.
		A particularly known instance of this class is $t = n-1$, which corresponds to the star graph and has algebraic connectivity with multiplicity $n-2$ and therefore $\Delta =0$ for $n > 3$.
		
		Another known example where $\Delta =0$ is the complete graph $K_n$ of order $n$ where there is only one non-zero eigenvalue equal to $n$ and with multiplicity $n-1$.
		We now turn our attention to graphs with poor expansion properties such as grids.
		A $m\times n$ grid, denoted by $\Grid(m,n)$, is a connected graph such that it has 4 corner vertices which have two edges each, $m-2$ vertices that have $3$ edges which make up the short ``edge of a rectangle'' and $n-2$ vertices that have $3$ edges each which make up the ``long edge of a rectangle'' and $(n-2)(m-2)$ inner vertices which each have four edges.
		\citep{edwards2013discrete} characterizes the full Laplacian spectrum for grid graphs as follows: the eigenvalues of the Laplacian matrix of $\Grid(m,n)$ are of the form 
		$\lambda_{i,j} = (2 \sin(\frac{\pi i}{2n}))^2 + (2 \sin(\frac{\pi j}{2m}))^2$, where $i$ and $j$ are non-negative integers.
		Next, we present a corollary showing the behavior of $\Delta$ in grids.
		\begin{corollary}
		\label{cor:grid_graph}
			Let $G$ be a grid graph, $\Grid(m,n)$, then we have:
			\begin{itemize}
			\setlength\itemsep{-0.2em}
			\item If $m = n$ then $\Delta = 0$.
			\item If $m \neq n$ then $\Delta > 0$.
			\end{itemize}
		\end{corollary}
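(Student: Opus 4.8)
The plan is to reduce the dichotomy to a statement about the \emph{multiplicity} of the algebraic connectivity and then read that multiplicity off the explicit grid spectrum supplied just before the corollary. Recall that $\Delta = \lambda_3(\mL_G) - \lambda_2(\mL_G)$ vanishes precisely when $\lambda_2(\mL_G)$ has multiplicity at least two; equivalently, $\Delta > 0$ iff the algebraic connectivity is a simple eigenvalue. So I would compute the two smallest positive eigenvalues of $\mL_{\Grid(m,n)}$ directly from the formula $\lambda_{i,j} = (2\sin(\pi i/(2n)))^2 + (2\sin(\pi j/(2m)))^2$ and simply count how often the minimum positive value is attained.

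First I would fix the index ranges $i \in \{0,\dots,n-1\}$ and $j \in \{0,\dots,m-1\}$, so that both arguments $\pi i/(2n)$ and $\pi j/(2m)$ lie in $[0,\pi/2)$. On this interval $\sin$ is nonnegative and strictly increasing, hence each summand $4\sin^2(\pi i/(2n))$ is a strictly increasing function of $i$, and likewise in $j$. The unique zero eigenvalue is $\lambda_{0,0}=0$, reflecting connectedness. The next smallest values are therefore obtained by incrementing exactly one index from zero to one: the only candidates for $\lambda_2(\mL_G)$ are $\lambda_{1,0} = 4\sin^2(\pi/(2n))$ and $\lambda_{0,1} = 4\sin^2(\pi/(2m))$, while every remaining pair $(i,j) \notin \{(0,0),(1,0),(0,1)\}$ yields a strictly larger value by the monotonicity just noted.

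With this ordering established the two cases are immediate. If $m=n$, then $\lambda_{1,0} = \lambda_{0,1} = 4\sin^2(\pi/(2n))$, so the minimum positive eigenvalue is attained at two distinct index pairs; hence $\lambda_2(\mL_G) = \lambda_3(\mL_G)$ and $\Delta = 0$. If $m \neq n$, say $m<n$, then $\pi/(2m) > \pi/(2n)$ with both in $(0,\pi/2)$, so strict monotonicity gives $\lambda_{0,1} > \lambda_{1,0}$; moreover I would verify that no other pair attains the value $\lambda_{1,0}$ (for $i=0$ one would need $\sin(\pi j/(2m))=\sin(\pi/(2n))$, impossible for $j\geq 1$ since already $\sin(\pi/(2m))>\sin(\pi/(2n))$; for $i\geq 2$ the first summand alone exceeds $\lambda_{1,0}$). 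Thus $\lambda_2(\mL_G)=\lambda_{1,0}$ is simple, forcing $\lambda_3(\mL_G) > \lambda_2(\mL_G)$ and $\Delta>0$.

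The trigonometric comparisons themselves are routine; the only place demanding care is the multiplicity bookkeeping in the second case, namely confirming that the minimal positive eigenvalue is attained at exactly one index pair when $m\neq n$. Since this uniqueness is exactly what separates $\Delta>0$ from $\Delta=0$, I would treat it as the crux and make the strict-monotonicity argument fully explicit there, whereas the $m=n$ case needs only the observation that two different index pairs collide.
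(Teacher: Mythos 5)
Your proposal is correct and follows essentially the same route as the paper: read the two smallest positive eigenvalues $\lambda_{1,0}$ and $\lambda_{0,1}$ off the explicit grid spectrum and compare them, concluding $\Delta=0$ exactly when they coincide. Your version is in fact slightly more careful than the paper's (you use strict monotonicity of $\sin$ on $[0,\pi/2)$ over the discrete index range rather than a continuous derivative argument, and you explicitly verify simplicity of $\lambda_2$ when $m\neq n$), but the underlying argument is identical.
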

	
		\begin{proof}
			Since $\lambda_{i,j} = (2 \sin(\frac{\pi i}{2n}))^2 + (2 \sin(\frac{\pi j}{2m}))^2$, then $\lambda_{i,j} = 0$ if and only if $(i,j) = (0,0)$ and corresponds to the first eigenvalue of the Laplacian.
			It is clear that the next minimum should be of the form $\lambda_{0,j}$ and $\lambda_{i,0}$.
			By taking derivatives we obtain: $\frac{d \lambda_{i,0}}{d i} = \frac{2\pi}{n} \sin( \frac{\pi i}{n} )$ and $\frac{d \lambda_{0,j}}{d j} = \frac{2\pi}{m} \sin( \frac{\pi j}{m} )$.
			We observe that the minimums are attained at $\lambda_{1,0} = (2 \sin(\frac{\pi}{2n}))^2$ and $\lambda_{0,1} = (2 \sin(\frac{\pi }{2m}))^2$ respectively.
			Thus, when $m=n$ we have $\Delta = 0$ and when $m \neq n$ we have $\Delta >0$.
		\end{proof}
		That is, Corollary \ref{cor:grid_graph} states that square grids have $\Delta = 0$, while rectangular grids have $\Delta >0$.
		To conclude our discussion on $\Delta$, we empirically show that the family of \Erdos-\Renyi graphs exhibit $\Delta > 0$ with high probability.
		Specifically, we let $G \sim \ER(n, r)$, where $r$ is the edge probability.
		When $r=1$, $G$ is the complete graph of order $n$ then $\Delta > 0$ with probability zero.
		Interestingly, when $r = 0.9$ or $r=0.99$, that is, values close to $1$, the probability of $\Delta > 0$ tends to 1 as $n$ increases.
		Also, we analyze the case when $r = \nicefrac{2\log n}{n}$,\footnote{The reason for the choice of $r=\nicefrac{2\log n}{n}$ is due to that for $r > \nicefrac{(1+\varepsilon) \log n}{n}$ then the graph is connected almost surely.} and also observe high probability of $\Delta > 0$.
		The aforementioned results are depicted in Figure \ref{fig:erdosdelta} (Left).
		Intuitively, this suggests that the family of graphs where $\Delta > 0$ is much larger than the families where $\Delta = 0$.
		Finally, in Figure \ref{fig:erdosdelta} (Right), we also plot the expected value of the gap, where we note an interesting concentration of the gap to $0.5$ for $r=\nicefrac{2\log n}{n}$ and remains an open question to explain this behavior.
		\begin{figure}[!ht]
		\centering
		\includegraphics[width=0.45\linewidth]{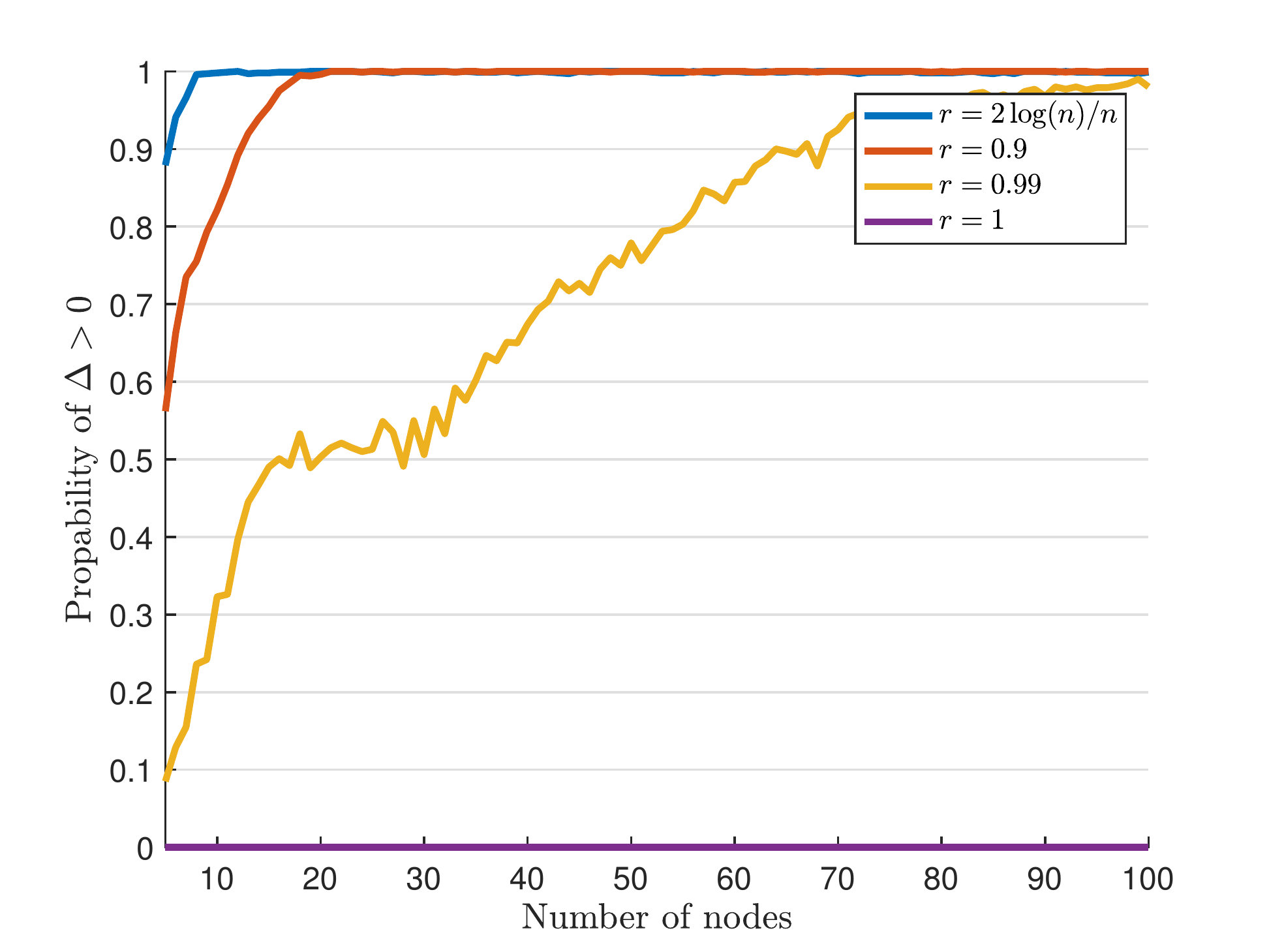} \quad \quad \quad \quad 
		\includegraphics[width=0.45\linewidth]{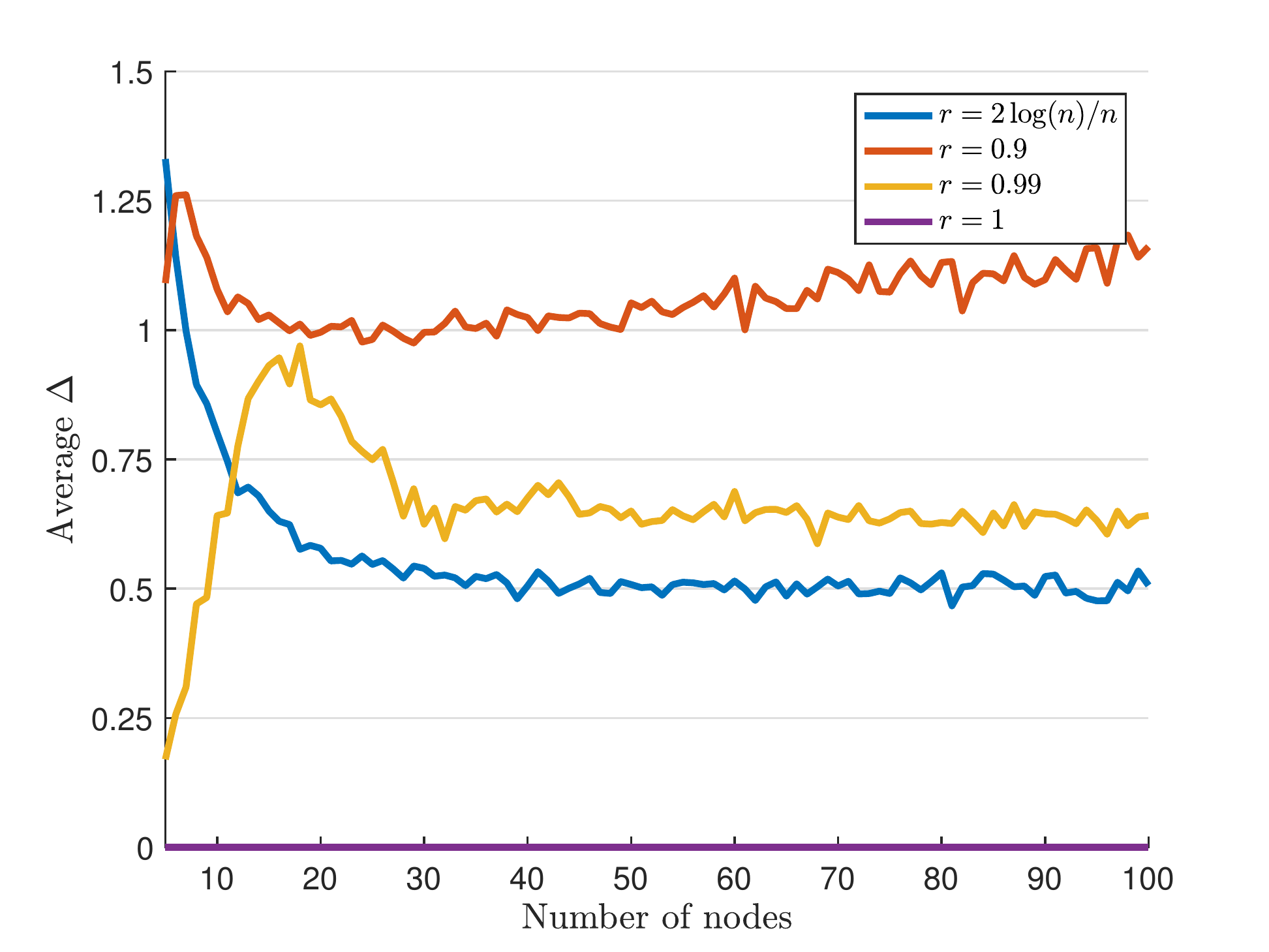}

		\caption{
		\small
		Graphs drawn from an \Erdos-\Renyi model with $n$ nodes and edge probability $r$.
		\textit{(Left.)} Probability of $\Delta > 0$ for each number of nodes, we draw 1000 graphs and compute $\Delta$, then, we count an event as success whenever $\Delta > 0$, and failure when $\Delta = 0$.
		\textit{(Right.)} Expected value of $\Delta$ computed across the 1000 random graphs for each number of nodes.
		}
		\label{fig:erdosdelta}
		\end{figure}
		\subsection{Experiments}
		In this section, we corroborate our theoretical results through synthetic experiments.
		Graphs with high expansion properties such as complete graphs and $d$-regular expanders are known to manifest high probability of exact recovery as their Cheeger constant increases with respect to $n$ or $d$ \citep{bello2019exact}.
		That is, in these graphs, the effect of the fairness constraint will not be noticeable.
		In contrast, graphs with poor expansion properties such as grids, which have a Cheeger constant in the order of $\gO(\nicefrac{1}{n})$ for a $\Grid(n,n)$, can only be recovered approximately \citep{globerson2015hard}, or exactly if the graph can be perturbed with additional edges \citep{bello2019exact}.
		Thus, we focus our experiments on grids and empirically show how the inclusion of the fairness constraint boosts the probability of exact recovery.
		In Figure \ref{fig:rec_grid}, we first randomly set $\vys$ by independently sampling each $\evys_i$ from a Rademacher distribution.
		We consider a graph of $64$ nodes, specifically, $\Grid(4,16)$, i.e., $\Delta$ is guaranteed to be greater than 0.
		Finally, we compute 30 observations for $p\in [0,0.1]$.
		We observe that the probability of exact recovery decreases with a very high rate, while the addition of  fairness constraints improves the exact recovery probability.
		In particular, we note that while the addition of a single fairness constraint (SDP + 1F) helps to achieve exact recovery, the tendency is to still decrease as $p$ increases, in this case the attribute $\va_1$ was randomly sampled from the nullspace of $\vys^\top$ so that $\vys^\top \va_1 = 0$.
		We also show the case when two fairness constraints are added (SDP + 2F), were we observe that exact recovery happens almost surely, here the two attributes also come randomly from the nullspace of $\vys^\top$.

		\begin{figure}[!ht]
		\centering
		\includegraphics[width=0.5\linewidth]{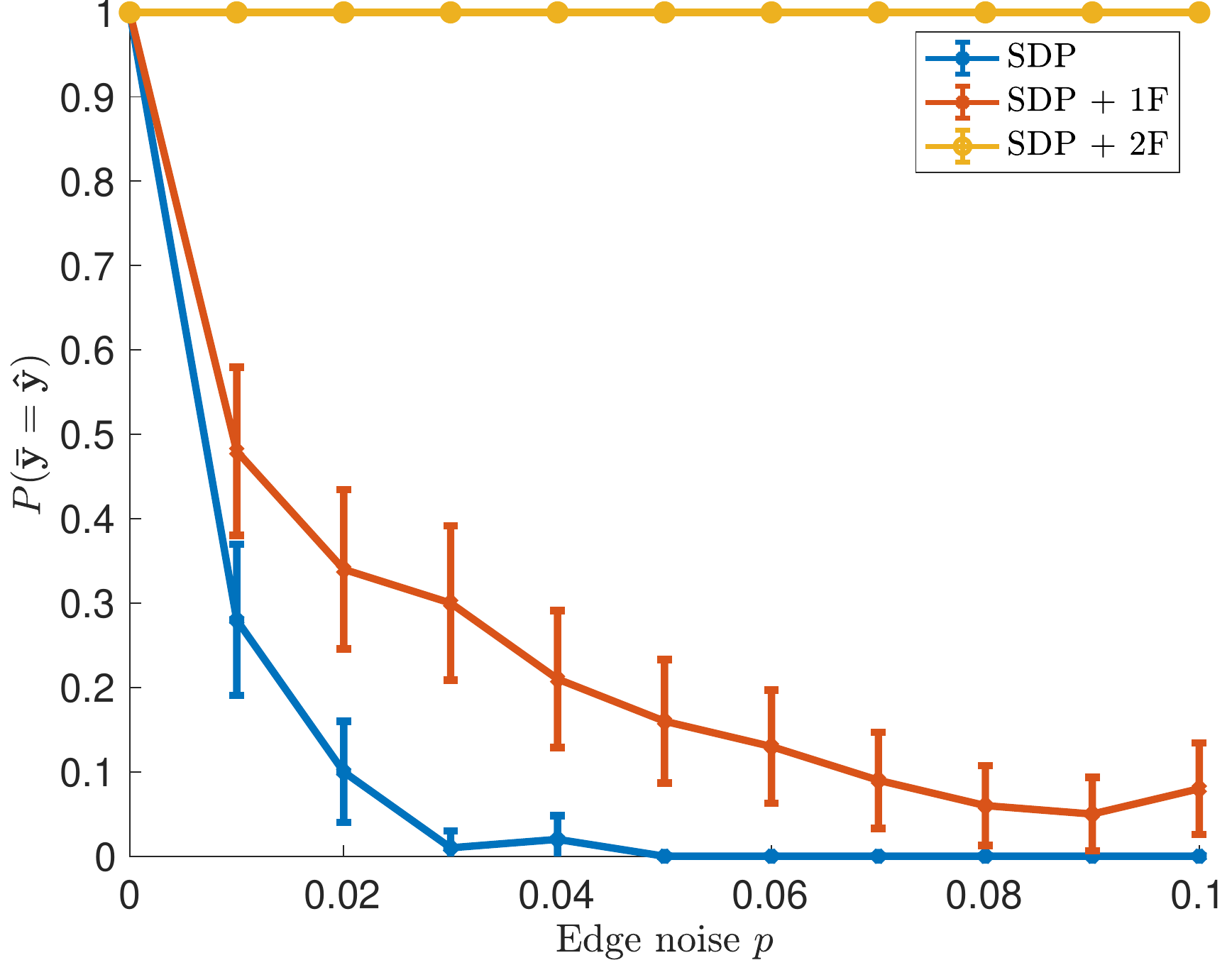}
		\caption{	
		\small
			Probability of exact recovery for $\Grid(4,16)$ computed across 30 observations $\mX$ for different values of $p \in [0,0.1]$.
			We observe how the addition of fairness constraints helps exact recovery, where SDP+1F refers to the addition of a single constraint, and SDP+2F the addition of two constraints.
		}
		\label{fig:rec_grid}
		\end{figure}

	\section{Concluding remarks}
		We considered a model similar to that of \citep{globerson2015hard,bello2019exact, foster2018inference, abbe2016exact}
		and studied the effect of adding fairness constraints, specifically, under a notion of statistical parity, and showed how it can help increasing the probability of exact recovery even for graphs with poor expansion properties such as grids.
		We argue that even in the scenario of having ``fair data'' one should not rule out the possibility of adding fairness constraints as there is a chance that it can help increasing the performance.
		For instance, a practitioner could use one of the several \textit{preprocessing} methods for debiasing a dataset with respect to a particular metric \citep{zemel2013learning,calmon2017optimized,louizos2015variational,gordaliza2019obtaining}, assuming that the data is now fair, the practitioner might be tempted to not use any fairness constraint anymore.
		However, as showed in this work, when the data is fair, adding fairness constraint can improve performance.
		As future work, it might be interesting to analyze different soft versions of the generative model such as letting the data being at most $\varepsilon$-away with respect to some fairness criteria instead of imposing a hard constraint.

	\bibliographystyle{agsm}
	\bibliography{fairness_paper}

\end{document}